\newtheorem{theorem}{Theorem}
\newtheorem{lemma}{Lemma}
\newtheorem{proposition}{Proposition}
\newtheorem{problem}{Problem}
\renewcommand{\leq}{\leqslant}
\renewcommand{\geq}{\geqslant}
\renewcommand{\phi}{\varphi}
\renewcommand{\epsilon}{\varepsilon}
\newcommand{\M}{\mathcal{M}}
\newcommand{\E}{\mathbb{E}}
\newcommand{\A}{\mathcal{A}}
\newcommand{\B}{\mathcal{B}}
\renewcommand{\P}{\mathbb{P}}
\newcommand{\R}{\mathbb{R}}
\newcommand{\Q}{\mathbb{Q}}
\newcommand{\D}{\mathcal{D}}
\newcommand{\btheta}{\boldsymbol{\theta}}
\begin{document}

\title{\bf Distributionally Robust Clustered Federated Learning:\\A Case Study in Healthcare}

 \author{Xenia Konti$^{1}$ \quad Hans Riess$^{2}$ \quad Manos Giannopoulos$^{1}$ \quad  Yi Shen$^{3}$ \\ Michael J.~Pencina$^{4}$ \quad Nicoleta J~Economou-Zavlanos$^{4}$ \quad Michael M.~Zavlanos$^{1,2,3}$
\thanks{$^{1}$Dept.~of Computer Science, Duke University.}%
\thanks{$^{2}$Dept.~of Electrical and Computer Engineering, Duke University.}%
\thanks{$^{3}$Dept.~of Mechanical Engineering \& Materials Science, Duke University.}%
\thanks{$^{4}$School of Medicine, Duke University.}%
\thanks{Emails: \{pek7, hmr14, eg261, ys267, mjp50, nje6, mz61\}@duke.edu}%
\thanks{This work is supported in part by The Duke Endowment (TDE) under grant \#7262-SP and by the Onassis Foundation under award \#ZT033-1/2023-2024.}%
}

% make the title area
\maketitle

\begin{abstract}
In this paper, we address the challenge of heterogeneous data distributions in cross-silo federated learning by introducing a novel algorithm, which we term Cross-silo Robust Clustered Federated Learning (CS-RCFL). Our approach leverages the Wasserstein distance to construct ambiguity sets around each client's empirical distribution that capture possible distribution shifts in the local data, enabling evaluation of worst-case model performance. We then propose a model-agnostic integer fractional program to determine the optimal distributionally robust clustering of clients into coalitions so that possible biases in the local models caused by statistically heterogeneous client datasets are avoided, and analyze our method for linear and logistic regression models. Finally, we discuss a federated learning protocol that ensures the privacy of client distributions, a critical consideration, for instance, when clients are healthcare institutions. We evaluate our algorithm on synthetic and real-world healthcare data.
\end{abstract}

%%%%%%%%%%%%%%%%%%%%%%%%%%%%%%%%%%%%%%%%%%%%%%%%
\section{Introduction}
\label{sec:intro}
%%%%%%%%%%%%%%%%%%%%%%%%%%%%%%%%%%%%%%%%%%%%%%%%

The complex nature of healthcare systems and data presents significant regulatory \cite{gostin2001} and ethical \cite{price2019} challenges associated with the design and deployment of large-scale machine learning models that need to comply with, e.g., the Health Insurance Portability and Accountability Act (HIPAA) in the United States or similar laws in other nations, which generally prohibit the sharing of patient data across healthcare organizations. At the cutting-edge, are distributed and privacy-preserving machine learning systems, broadly known as \emph{Federated Learning} (FL)  systems \cite{kairouz2021}, which tackle the regulatory problem of data-sharing by avoiding it altogether; they exploit accumulated statistical information on relevant patient populations or machine learning model parameters that carry no patient-specific information.

It is well known that the design of effective machine learning algorithms requires the availability of sufficient data. Depending on the characteristics of the model and the dimension of the feature space, a theoretical minimum number of samples, called the sample complexity, is required for a model to generalize to unseen data sampled from the same distribution on which it was trained \cite{vapnik2013nature}. In practice, however, such data are often unavailable. Federated learning algorithms overcome this challenge by instantiating a cross-silo collaboration across data centers, such as those housed within healthcare organizations, that allows sharing of data insights rather than raw data and uses this distributed knowledge to train better-performing models. These data centers and the organizations they are housed in are often referred to as \emph{clients}; in healthcare systems they are simply the \emph{hospitals}. Among the many possibilities, FL has promising applications in healthcare informatics \cite{xu2021} and smart healthcare technologies such as remote monitoring, disease detection, and medical imaging; see \cite{nguyen2022b} for a survey. 

An important challenge in the design of effective federated learning algorithms is that the participating organizations are often subject to statistically heterogeneous datasets. This is, e.g., the case in healthcare applications, where hospitals often attend to statistically different patient populations. In an effort to ensure model fairness, meaning that the learned local models are free of any possible biases that may arise by aggregating models trained on different data distributions, \emph{personalized federated learning} algorithms have been developed \cite{tan2022}, that take advantage of collaboration between hospitals while at the same time implementing ways to mitigate bias. In healthcare applications, bias can be caused by demographic, geographic, financial, educational, political, and environmental differences across different healthcare organizations and the patient populations they serve \cite{penman2016}. A common approach to personalized federated learning is, what is known as, \emph{Clustered Federated Learning} (CFL)  \cite{cho2021personalized} that reduces bias by clustering together and restricting collaboration between hospitals that potentially have a common or similar data-generating distributions.

% \begin{figure}[h]
%     \centering
%     \includegraphics[height=6cm]{}
%     \caption{Coalition formation coordinated by a lead hospital.}
%     \label{fig:fed_proc}
% \end{figure}

In this work, we propose a new clustered federated learning algorithm that accounts for both statistical heterogeneity in the data of different participating organizations and statistical sampling uncertainty (e.g., due to finite sampling) or other distributions shifts in the local data of each individual organization. For this, we employ ideas from Distributionally Robust Optimization (DRO) to construct an integer fractional program that we solve to determine the optimal
distributionally robust clustering of clients into coalitions so that (i) the learned local models generalize to unseen data sampled from distributions that are different from, but close enough to, the training ones and (ii) possible biases in the local models caused by statistically
heterogeneous hospital datasets are avoided. We term our proposed algorithm Cross-Silo Robust Clustered Federated Learning (CS-RCFL). We theoretically justify our algorithm for two different types of models: linear regression with absolute loss and logistic regression. We finally evaluate our algorithm on both synthetic and real-world healthcare data, and show that it outperforms related methods.

\subsection{Related Work}

\subsubsection{Clustered Federated Learning (CFL)}

CFL methods often construct clusters in rounds, potentially training models for several epochs between consecutive rounds. This approach was employed in \cite{sattler2020} that iteratively clusters users by bi-partitioning existing clusters at each round. To address the high communication overhead of typical CFL methods \cite{shlezinger2020}, more efficient algorithms have been recently developed. An example is IFCA \cite{ghosh2020}, which randomly initializes cluster centers and assigns clients to the nearest one, though this method is sensitive to center initialization. Similarly, FeSEM \cite{long2023} relies on an $\ell_2$-distance -based expectation maximization (EM) algorithm. The convergence of these methods has been studied in \cite{ma2022}. There are also works that deviate from model similarity-based clustering, introducing alternative coalition formation objectives \cite{luo2022, mansour2020, morafah2023} or incorporating robustness notions to produce robust clustering schemes \cite{duan2021, werner2023, guo2023}. The literature discussed above focuses on cross-device federated learning, involving personal devices such as smartphones. Instead, in this paper, we focus on cross-silo federated learning that involves data centers affiliated with organizations \cite{huang2022}.
%CFL typically refers to cross-device federated learning, involving personal devices such as smartphones, while cross-silo federated learning involves data centers affiliated with organizations \cite{huang2022}. 
In cross-silo FL, global models are typically trained iteratively with clients sharing locally fine-tuned versions. Clustered cross-silo federated learning methods construct coalitions during each round using model similarity measures \cite{huang2021, jiang2022} or game-theoretic incentive compatibility \cite{bao2023}. Common in the methods discussed above is that they do not account for possible distribution shifts between the training and test data. In this paper, we address this challenge employing notions from DRO.

\subsubsection{DRO in Federated Learning}
Interest in robust federated learning from a Distributionally Robust Optimization (DRO) perspective has grown recently. Agnostic FL \cite{mohri2019} was the first framework to optimize over the worst-case mixture of local models, instead of assigning uniform or size-proportional weights. An improved algorithm with less communication overhead was introduced in \cite{deng2020}, which uses periodic averaging and samples a subset of clients at each round. By relaxing the worst-case mixture of local models, \cite{pillutla2023} imposes constraints on the weights determined by the superquantile. Similar to our approach, but for non-clustered FL, \cite{nguyen2022, reisizadeh2020} consider distribution shifts and rely on Wasserstein-DRO to account for them. Here we extend this framework to clustered FL by designing client coalitions that are robust to possible uncertainties in the training data distributions.

%%%%%%%%%%%%%%%%%%%%%%%%%%%%%%%%%%%%%%%%%%%%%%%%%%%%%%%%%%
\section{Background}
\label{sec:background}
%%%%%%%%%%%%%%%%%%%%%%%%%%%%%%%%%%%%%%%%%%%%%%%%%%%%%%%%%%

\subsection{Federated Learning}

Federated learning (FL) is a privacy-centric distributed learning framework. Its appeal is mostly attributed to its scalable and decentralized nature, allowing for simultaneous training of local models on local datasets that will, then, be aggregated into a global model that combines the predictive capabilities of its individual parts. At the same time, federated learning guards individual users' data by providing privacy guarantees \cite{bonawitz2019}. In the standard terminology, several \emph{clients} (e.g.,~hospitals) and one or more \emph{servers} can form a FL system. Communication takes place between the clients and the server (vertical FL) and possibly between the clients directly (horizontal FL). For instance, clients may upload parameters to the server or download parameters or aggregations of parameters that the server has received. As discussed before, FL frameworks are also classified by the types of clients, including mobile devices (cross-device FL) or data silos managed by organizations (cross-silo FL). In this paper, we develop a variation of the well-known FedAve \cite{mcmahan2017} algorithm, which aggregates individual models into global models by taking weighted averages of model parameters; see Eq.~\eqref{eq:agg_theta}. 

%%%%%%%%%%%%%%%%%%%%%%%%%%%%%%%%%%%%%%%%%%%%%%%%%%%%%%%%%%
\subsection{Distributionally Robust Optimization (DRO)}
%%%%%%%%%%%%%%%%%%%%%%%%%%%%%%%%%%%%%%%%%%%%%%%%%%%%%%%%%%

We first recall some technical definitions from optimal transport theory \cite{thorpe2018introduction}  in order to define distances between distributions, essential to our problem.
Suppose $\Q_1, \Q_2$ are distributions supported on a subset $\Xi$ of a euclidean space with a chosen metric $\delta$ (e.g., the $\ell_2$-norm), and suppose $\Gamma(\Q_1,\Q_2)$ is the set of joint distributions on $\Xi \times \Xi$ with marginals $\Q_1$ and $\Q_2$. Then, the $1$-\emph{Wasserstein distance} \cite{ruschendorf1985wasserstein} between $\Q_1$ and $\Q_2$ is defined as
\begin{align*}
    W_1(\Q_1, \Q_2) =  \textstyle\inf_{\P \in \Gamma(\Q_1,\Q_2)} \biggl\{ \int_{\Xi \times \Xi} \delta(\xi_1,\xi_2) d \P(\xi_1, \xi_2) \biggr\}.
\end{align*}

We consider the problem of minimizing an objective function under uncertainty. Suppose $f: \R^p \times \Xi \to \R$ is an objective function with deterministic decision variables and uncertain parameters $x \in \R^p$ and $\xi \in \Xi$, respectively, and suppose $\hat{\Q}$ is an estimated distribution of $ \xi \in \Xi$. Where robustness to distributional shifts is desired, a common practice is to form \emph{ambiguity sets}, that is sets $\mathcal{P}$ of probability distributions, that effectively convert the objective function $f$ to an objective function of the form $\sup_{\Q \in \mathcal{P}} \E_{\xi \sim \Q}\bigl[f(x,\xi)\bigr]$. 
%\ys{the notations of $x$ and $\theta$ are confusing. $\mathbb{R}^m$ vs $\mathcal{R}^p$. in the lemmas, it is better to state that what is given, e.g., $x$ or $\theta$. }
% \begin{align}
%      \sup_{\Q \in \mathcal{P}} \E_{\xi \sim \Q}\bigl[f(x,\xi)\bigr]. \label{eq:DR-objective}
% \end{align}
A typical choice for $\mathcal{P}$ is a ball of radius $\epsilon >0$ centered around an estimated distribution $\hat{\Q}$ in the metric space of probability measures with the $W_1$ distance metric, denoted by $\B_\epsilon(\hat{\Q}) = \{ \Q : W_1(\Q, \hat{\Q}) \leq \epsilon \}$, giving rise to the robust optimization problem
\begin{align}
\label{eq:DRO-main}
    \min_{x} \textstyle\sup_{\Q \in \B_\epsilon(\hat{\Q})} \E_{\xi \sim \Q}\bigl[f(x,\xi)\bigr].
\end{align}
While solving \eqref{eq:DRO-main} is generically computationally expensive, a dual formulation can lead to tractable solutions.

\begin{lemma}[Strong Duality \cite{esfahani2017}] \label{lem:dual}
Suppose $\Xi \subseteq \R^d$ and $f: \R^m \times \Xi \to \R$ is proper, convex, and lower semi-continuous. Suppose $\hat{\Q} \in \M(\Xi)$ is an estimated distribution on $\Xi$. Consider the worst-case expectation problem 
\begin{align}
   \textstyle\sup_{\Q \in \B_{\epsilon}(\hat{\Q})} \mathbb{E}_{\xi \sim \Q}\left[f(\xi)\right]. \label{eq:primal}
\end{align}
Then, the dual formulation
\begin{align}
    \label{eq:dual-formulation}
    \inf_{\lambda \geq 0} \Bigl\{\lambda \epsilon + \textstyle\E_{\xi' \sim \hat{\Q}}\bigl[\sup_{\xi \in \Xi} (f(\xi) - \lambda \delta(\xi', \xi) )\bigr]\Bigr\}
\end{align}
has a zero duality-gap.
\end{lemma}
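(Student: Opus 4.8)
The plan is to recast the worst-case expectation \eqref{eq:primal} as an infinite-dimensional linear program over couplings and then dualize the single Wasserstein constraint. First I would use the definition of $W_1$ to lift the primal: since $W_1(\Q,\hat{\Q}) \leq \epsilon$ holds if and only if there is a joint law $\P \in \Gamma(\hat{\Q},\Q)$ with $\int_{\Xi\times\Xi}\delta(\xi',\xi)\,d\P \leq \epsilon$, the supremum over $\Q \in \B_\epsilon(\hat{\Q})$ equals the supremum over all couplings $\P$ on $\Xi\times\Xi$ whose first marginal $\Pi_1\P$ is $\hat{\Q}$ and which satisfy the transport-budget constraint $\int \delta(\xi',\xi)\,d\P \leq \epsilon$, with objective $\int f(\xi)\,d\P$. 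This is a linear program in the measure $\P$: objective and constraint are both linear, and the feasible set is a convex subset of the cone of nonnegative measures.

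Next I would attach a multiplier $\lambda \geq 0$ to the transport constraint and pass to the Lagrangian,
\begin{align*}
\inf_{\lambda \geq 0}\ \sup_{\P:\,\Pi_1\P = \hat{\Q}}\Bigl\{\textstyle\int f(\xi)\,d\P + \lambda\bigl(\epsilon - \int\delta(\xi',\xi)\,d\P\bigr)\Bigr\}.
\end{align*}
Weak duality (the $\geq$ direction) is immediate. For strong duality I would invoke a Slater-type condition: because $\epsilon > 0$, the identity coupling $\P = (\mathrm{id},\mathrm{id})_{\#}\hat{\Q}$ is strictly feasible, with $\int \delta\,d\P = 0 < \epsilon$, which together with convexity of the lifted program certifies a zero gap through a conic/semi-infinite LP duality theorem for moment problems.

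Then I would evaluate the inner supremum in closed form. Disintegrating $\P$ along its fixed first marginal as $\P(d\xi',d\xi) = \hat{\Q}(d\xi')\,\P_{\xi'}(d\xi)$ and applying the interchangeability (measurable-selection) principle lets me move the supremum inside the $\hat{\Q}$-expectation,
\begin{align*}
\sup_{\P:\,\Pi_1\P=\hat{\Q}}\textstyle\int [f(\xi)-\lambda\delta(\xi',\xi)]\,d\P = \E_{\xi'\sim\hat{\Q}}\Bigl[\sup_{\P_{\xi'}}\textstyle\int [f(\xi)-\lambda\delta(\xi',\xi)]\,\P_{\xi'}(d\xi)\Bigr].
\end{align*}
Since the inner objective is linear in the conditional law $\P_{\xi'}$, its optimizer is a Dirac mass placed at a maximizer of $\xi \mapsto f(\xi) - \lambda\delta(\xi',\xi)$, collapsing the inner supremum to $\sup_{\xi\in\Xi}(f(\xi)-\lambda\delta(\xi',\xi))$ and yielding exactly \eqref{eq:dual-formulation}.

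The main obstacle I anticipate is rigorously certifying the zero duality gap in the dualization step: the problem is an infinite-dimensional linear program over measures, so the finite-dimensional Slater argument must be replaced by a duality theorem that accommodates a measure-valued variable subject to a single linear inequality. Here the hypotheses on $f$ do the work—properness and convexity keep the Lagrangian values finite and the lifted program conic-feasible, while lower semicontinuity together with measurability of $f$ guarantee that the map $(\xi',\lambda)\mapsto\sup_{\xi\in\Xi}(f(\xi)-\lambda\delta(\xi',\xi))$ is jointly measurable, which is precisely what licenses the interchange of $\E_{\xi'\sim\hat{\Q}}$ and $\sup_\xi$ via the interchangeability principle. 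Verifying that the strictly feasible identity coupling suffices to rule out a gap, and handling the degenerate regime $\lambda\to\infty$ carefully, are the remaining details I would need to check.
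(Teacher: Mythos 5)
The paper does not prove this lemma: it is quoted as a known result from \cite{esfahani2017}, so there is no in-paper proof to compare against. Your outline is the standard argument from that literature and is structurally correct: lift the ball constraint to a linear program over couplings whose first marginal is $\hat{\Q}$, dualize the single transport-budget constraint with a multiplier $\lambda\geq 0$, then disintegrate along the fixed marginal and use the interchangeability (measurable selection) principle so that the conditional optimizers are Dirac masses, collapsing the inner problem to the pointwise supremum in \eqref{eq:dual-formulation}. Two caveats. First, your Slater certificate (the diagonal coupling with zero transport cost) requires $\epsilon>0$, which the statement does not make explicit; the case $\epsilon=0$ must be handled separately via the limit $\lambda\to\infty$. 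Second, the zero-gap step is the genuinely hard part and your sketch only gestures at it: the reference the paper cites establishes it for finitely supported $\hat{\Q}$ by rewriting the lifted program as a finite family of moment problems (one conditional distribution per atom of $\hat{\Q}$) and invoking conic linear programming duality, while the general-measure version (Blanchet--Murthy, Gao--Kleywegt) needs an approximation argument and assumes upper semicontinuity plus a growth/integrability condition on $f$ rather than convexity --- note that convexity and lower semicontinuity, as hypothesized, are not what drives the duality for a worst-case \emph{supremum}. To complete the proof you would need to either restrict to empirical $\hat{\Q}$ (which is all the paper actually uses) and cite the conic duality theorem, or supply the infinite-dimensional strong duality result explicitly; the finite-dimensional Slater intuition alone does not close the gap.
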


Thus, we can solve \eqref{eq:dual-formulation} to get a solution for \eqref{eq:DRO-main}.
Finally, we recall the dual formulation for both linear regression with absolute loss \cite{chen2018} and logistic regression \cite{shafieezadeh2015distributionally}.

\begin{lemma}[Linear regression \cite{chen2018}] \label{lem:linear} 
    Suppose $l_{\theta}(x,y) = \left|y - x^\top \theta\right|$. Then, $\inf_{\theta}\sup_{\Q \in \B_{\epsilon}(\hat{\P}_N)} \E_{(x,y)\sim \Q}[l_{\theta}(x,y)]$ is equal the solution of the following optimization problem:
\begin{align}
\label{eq:lemma-2}
    \begin{aligned}
    \min_{\theta, \alpha, b}  &\quad \alpha\epsilon + \frac{1}{N}\textstyle\textstyle\sum_{i=1}^{N} b_i  \\
    \text{s.t.} &\quad \lVert \theta \rVert_2^2 + 1 \leq \alpha^2  \\
    &\quad y_i - x_i^\top \theta \leq b_i, \forall i \in N \\
    &\quad -(y_i - x_i^\top \theta) \leq b_i, \forall i \in N \\
    &\quad \alpha, b_i \geq 0, \forall i \in N \\
    &\quad \theta \in \B.
    \end{aligned}
\end{align}
\end{lemma}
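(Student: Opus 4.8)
The plan is to invoke the strong-duality result of Lemma~\ref{lem:dual} on the inner worst-case expectation and then evaluate the resulting pointwise supremum in closed form. Concretely, I would fix $\theta$, set $\xi = (x,y)$ and $f(\xi) = l_\theta(x,y) = |y - x^\top\theta|$, and let $\hat{\P}_N$ be the empirical distribution placing mass $1/N$ on each sample $(x_i,y_i)$, so that the expectation over $\hat{\P}_N$ collapses to an average over the $N$ points. Because $(x,y)\mapsto |y-x^\top\theta|$ is an absolute value of an affine map, it is proper, convex, and lower semi-continuous, so Lemma~\ref{lem:dual} applies and rewrites the inner supremum as
\begin{align*}
\inf_{\lambda \geq 0}\Bigl\{\lambda\epsilon + \tfrac{1}{N}\textstyle\sum_{i=1}^N \sup_{(x,y)}\bigl(|y - x^\top\theta| - \lambda\,\delta((x_i,y_i),(x,y))\bigr)\Bigr\},
\end{align*}
where I take $\delta$ to be the $\ell_2$ metric on $\Xi$.

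Next I would compute the inner pointwise supremum $s_i(\lambda,\theta) := \sup_{(x,y)}\bigl(|y - x^\top\theta| - \lambda\lVert(x,y)-(x_i,y_i)\rVert_2\bigr)$. The map $(x,y)\mapsto|y-x^\top\theta|$ is piecewise linear with two affine branches $y-x^\top\theta$ and $x^\top\theta-y$, each having gradient of $\ell_2$-norm $\sqrt{\lVert\theta\rVert_2^2+1}$; hence its global Lipschitz constant with respect to $\lVert\cdot\rVert_2$ is exactly $\sqrt{\lVert\theta\rVert_2^2+1}$. The key step is the resulting dichotomy: by Lipschitz continuity, $|y-x^\top\theta|-|y_i-x_i^\top\theta|\leq \sqrt{\lVert\theta\rVert_2^2+1}\,\lVert(x,y)-(x_i,y_i)\rVert_2$, so if $\lambda\geq\sqrt{\lVert\theta\rVert_2^2+1}$ the penalized objective never exceeds its value at the data point and $s_i=|y_i-x_i^\top\theta|$, attained at $(x,y)=(x_i,y_i)$; conversely, if $\lambda<\sqrt{\lVert\theta\rVert_2^2+1}$, moving along the steepest-ascent direction of the active branch makes the objective grow without bound, so $s_i=+\infty$.

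Substituting this dichotomy back, finiteness of the dual objective forces $\lambda\geq\sqrt{\lVert\theta\rVert_2^2+1}$, under which the objective reduces to $\lambda\epsilon + \frac{1}{N}\sum_i|y_i-x_i^\top\theta|$. Since this is increasing in $\lambda$, at optimality $\lambda$ attains its lower bound; writing $\alpha=\lambda$ and relaxing the equality to the (tight-at-optimum) inequality $\lVert\theta\rVert_2^2+1\leq\alpha^2$ yields the first constraint of \eqref{eq:lemma-2}. I would then introduce epigraph variables $b_i$ to linearize each absolute value via $y_i-x_i^\top\theta\leq b_i$, $-(y_i-x_i^\top\theta)\leq b_i$, and $b_i\geq 0$, so that minimizing $\sum_i b_i$ recovers $\sum_i|y_i-x_i^\top\theta|$. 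Reinstating the outer $\inf_\theta$ together with the ambient feasibility constraint $\theta\in\B$ produces exactly program~\eqref{eq:lemma-2}.

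I expect the crux to be the inner-supremum evaluation, specifically rigorously justifying the growth-versus-Lipschitz dichotomy for the nonsmooth absolute-value loss and identifying $\sqrt{\lVert\theta\rVert_2^2+1}$ as the exact threshold; the monotone elimination of $\lambda$ and the epigraph reformulation that follow are routine.
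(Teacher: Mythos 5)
The paper does not prove this lemma itself—it imports it verbatim from \cite{chen2018}—and your derivation is precisely the standard argument from that reference: apply the strong duality of Lemma~\ref{lem:dual}, evaluate the penalized supremum via the exact $\ell_2$-Lipschitz constant $\sqrt{\lVert\theta\rVert_2^2+1}$ of $(x,y)\mapsto|y-x^\top\theta|$ to get the finite/infinite dichotomy in $\lambda$, then eliminate $\lambda$ monotonically and linearize the absolute values with epigraph variables. The argument is correct and complete; the only implicit choice worth stating explicitly is that the ground metric on $\Xi$ must be the $\ell_2$ norm on the joint $(x,y)$ space (as you assume), since the constraint $\lVert\theta\rVert_2^2+1\leq\alpha^2$ is exactly the self-dual-norm bound $\alpha\geq\lVert(-\theta,1)\rVert_2$ and would change under a different choice of $\delta$.
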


\begin{lemma} [Logistic regression \cite{shafieezadeh2015distributionally}]
\label{lem:logistic}
Suppose $l_{\theta}(x,y) = \log (1+ \exp(-y \cdot x^T\theta))$ and define a distance metric between two data points ($x,y$) and ($x^\prime, y^\prime$) as $d((x, y), (x^\prime, y^\prime)) = ||x - x^\prime|| +\kappa\frac{|y - y^\prime|}{2}$, where $\kappa$ is a positive weight. Then, $\inf_{\theta}\sup_{\Q \in \B_{\epsilon}(\hat{\P}_N)} \E_{(x,y)\sim \Q}[l_{\theta}(x,y)]$ is equal to the solution of the following optimization problem:
\begin{align}
    \begin{aligned}
    \label{eq:lemma-3}
    \min_{\theta, \alpha, b}  &\quad \alpha\epsilon + \frac{1}{N}\textstyle\textstyle\sum_{i=1}^N b_i  \\
    \text{s.t.} &\quad l_{\theta}(x_i,y_i) \leq b_i, \forall i \in N \\
    &\quad l_{\theta}(x_i,-y_i) - \alpha \kappa \leq b_i, \forall i \in N \\
    &\quad ||\theta||_\star \leq \alpha, \forall i \in N \\
    &\quad \alpha, b_i \geq 0, \forall i \in N \\
    &\quad \theta \in \B.
    \end{aligned}
\end{align}
\end{lemma}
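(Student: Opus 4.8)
The plan is to invoke the strong-duality result of Lemma~\ref{lem:dual} to collapse the inner worst-case expectation over the Wasserstein ball $\B_\epsilon(\hat{\P}_N)$ into a one-dimensional dual problem, and then to evaluate the resulting supremum in closed form by exploiting the structure of the logistic loss and the binary label set. Concretely, for a fixed $\theta$ the inner problem $\sup_{\Q\in\B_\epsilon(\hat{\P}_N)}\E_{(x,y)\sim\Q}[l_\theta(x,y)]$ is rewritten, via Lemma~\ref{lem:dual} with the ground metric $d$, as
\begin{equation*}
\inf_{\lambda\geq 0}\Bigl\{\lambda\epsilon + \tfrac{1}{N}\textstyle\sum_{i=1}^N \sup_{(x,y)}\bigl(l_\theta(x,y)-\lambda\, d((x_i,y_i),(x,y))\bigr)\Bigr\},
\end{equation*}
where the expectation under $\hat{\P}_N$ becomes an average over the $N$ empirical points $(x_i,y_i)$.

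First I would resolve the supremum over the discrete label coordinate. Since $y\in\{-1,+1\}$, the label term $\kappa|y-y_i|/2$ equals $0$ when $y=y_i$ and $\kappa$ when $y=-y_i$, so the inner supremum splits into two cases and equals $\max\{\,\sup_x(l_\theta(x,y_i)-\lambda\lVert x-x_i\rVert),\ \sup_x(l_\theta(x,-y_i)-\lambda\lVert x-x_i\rVert)-\lambda\kappa\,\}$. For each fixed label I would then evaluate the feature supremum $\sup_x(l_\theta(x,y)-\lambda\lVert x-x_i\rVert)$ using the fact that $x\mapsto l_\theta(x,y)$ is Lipschitz with constant $\lVert\theta\rVert_\star$ in the chosen norm, because the scalar map $t\mapsto\log(1+e^{-t})$ is $1$-Lipschitz while $x\mapsto y\,x^\top\theta$ has dual-norm slope $\lVert\theta\rVert_\star$. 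This yields the key dichotomy: if $\lambda\geq\lVert\theta\rVert_\star$ the penalty dominates and the supremum is attained at $x=x_i$ with value $l_\theta(x_i,y)$, whereas if $\lambda<\lVert\theta\rVert_\star$ one can drive $x$ along a growth direction to make the supremum $+\infty$. Hence the dual infimum is finite only on the feasible region $\lambda\geq\lVert\theta\rVert_\star$, and there the per-sample term reduces to $\max\{l_\theta(x_i,y_i),\,l_\theta(x_i,-y_i)-\lambda\kappa\}$.

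Finally I would linearize the pointwise maxima by epigraph variables $b_i$, imposing $l_\theta(x_i,y_i)\leq b_i$ and $l_\theta(x_i,-y_i)-\lambda\kappa\leq b_i$, relabel the dual multiplier $\lambda$ as $\alpha$ so that the finiteness constraint reads $\lVert\theta\rVert_\star\leq\alpha$, and carry the outer $\inf_\theta$ (with $\theta\in\B$) through. Collecting the objective $\alpha\epsilon+\tfrac1N\sum_i b_i$ with these constraints reproduces program~\eqref{eq:lemma-3} exactly. The main obstacle I anticipate is the closed-form evaluation of the feature supremum together with the careful identification of the exact finiteness threshold $\lambda\geq\lVert\theta\rVert_\star$ through the dual-norm Lipschitz bound; a secondary technical point is that the data live on the mixed space $\R^d\times\{-1,+1\}$, so one must verify that the hypotheses of Lemma~\ref{lem:dual} (or an appropriate extension of it) still license strong duality when the label coordinate is discrete.
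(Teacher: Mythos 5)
The paper does not prove this lemma; it is imported verbatim from the cited reference \cite{shafieezadeh2015distributionally}, so there is no in-paper argument to compare against. Your outline is nonetheless the correct and standard derivation from that source: dualize via Lemma~\ref{lem:dual}, split the per-sample supremum over the two labels, use the fact that $t\mapsto\log(1+e^{-t})$ is $1$-Lipschitz with unit asymptotic slope to show the feature supremum equals $l_\theta(x_i,y)$ when $\lambda\geq\lVert\theta\rVert_\star$ and $+\infty$ otherwise, and then introduce epigraph variables $b_i$ to obtain \eqref{eq:lemma-3}. The two caveats you flag yourself are the right ones to check (and are handled in the reference): strong duality must be justified on the mixed support $\R^n\times\{-1,+1\}$, which is fine because the support is a closed set and the loss is upper semicontinuous in $(x,y)$, and the boundary case $\lambda=\lVert\theta\rVert_\star$ must be verified to remain finite, which it does since the linear growth and the penalty exactly cancel.
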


%\input{notation-table}

%%%%%%%%%%%%%%%%%%%%%%%%%%%%%%%%%%%%%%%%%%%%%%%%%%%%%%%%%%%%%%%%%%%%%%%%%%%%
\vspace{0.2cm}

\section{Problem Formulation}
\label{sec:problem}
%%%%%%%%%%%%%%%%%%%%%%%%%%%%%%%%%%%%%%%%%%%%%%%%%%%%%%%%%%%%%%%%%%%%%%%%%%%%

Consider $N$ hospitals (the clients in FL) and assume that each hospital $i\in \{1,\dots,N\}$ has data collected in the set $\D_i = \big\{(x_i^{(p)},y_i^{(p)})\big\}_{p=1}^{|\D_i|}$, where every patient $p \in \{1,\dots,|\D_i|\}$ in the sample population of hospital $i$ has $n$ features $x_i^{(p)} \in \R^n$, e.g.,~height, weight, gender, blood pressure, etc., and observations $y_i^{(p)} \in \R$, e.g.,~their A1C. For each hospital $i$, the patient population is described by a true underlying nominal distribution, denoted by $\P_i$, so that $(x_i^{(p)}, y_i^{(p)}) \sim \P_i$, and an empirical distribution, denoted by $\hat{\P}_i$, that is obtained from the data in $\D_i$. We assume that the features and observations are related via a common underlying parametric model $f$, so that $y_i^{(p)} = f(x_i^{(p)};  \theta^{\ast}_i)$, where $\theta^{\ast}_i \in \R^n$ are the model parameters. Then, given a loss function $\mathcal{L}$, the goal of each hospital is to estimate the true parameters of the model $\theta_i^\ast$ that minimize the loss on the underlying true data distribution. To do so, we formulate the \emph{empirical risk minimization (ERM)} problem  $$\inf_{\theta \in \R^n} \E_{(x_i,y_i) \sim \hat{\P}_i} \bigl[ \mathcal{L} \left( f(x_i;\theta), y_i \right) \bigr].$$

We consider a federated setting, where hospitals voluntarily collaborate with each other in order to train more accurate local models. To mitigate possible biases in the local models that can be caused by differences in the patient populations served by the hospitals, the hospitals form coalitions so that knowledge is shared among hospitals with common or similar data-generating distributions. These coalitions are coordinated by a lead hospital. Specifically, the lead hospital aims to create a \emph{coalition structure} $\pi:\{1,\dots,N\}\to \{S_1, \dots, S_K\}$, that is a map from hospitals to $K$ coalitions $\{S_1, \dots, S_K\}$ that satisfy $S_1 \cup \cdots \cup S_K = \{1,\dots,N\}$, $S_k\cap S_{k'}=\emptyset$ for all $k\neq k'$, and $S_k\neq \emptyset$ for all $k\in\{1\dots,K\}$. Then, $\pi(i)=S_k$ denotes the unique coalition $S_k$ to which hospital $i$ is assigned. The  hospitals participating in this coalition structure first use their limited data to learn parameter estimates $\theta_i^{\text{local}}$, which we collect in a tuple $\btheta = \bigl( \theta_1^{\text{local}}, \dots, \theta_N^{\text{local}} \bigr)$. Then, they share these parameters with the lead hospital. The lead hospital aggregates the local parameters for each coalition and returns their mean to the member hospitals. Thus, every hospital $i$ belonging to coalition $\pi(i) = S_k$ receives the common estimated parameters
\begin{align}\label{eq:agg_theta}
    \theta_{S_k} = \frac{1}{|S_k|} \textstyle\sum_{i \in S_k} \theta_i^{\text{local}}.
\end{align}
Using the model parameters $\theta_{S_k}$, the expected loss of hospital $i$ in coalition $\pi(i) = S_k$ becomes
\begin{align}
\ell_i(\btheta, \pi) = \E_{(x_i, y_i)\sim \hat{\P}_i}\bigl[ \mathcal{L}\bigl(f(x_i; \theta_{S_k}), y_i\bigr) \bigr].
\end{align}

The difficulty in solving the ERM problem defined before lies in the fact that the lead hospital requires knowledge of the empirical distributions $\hat{\P}_i$ for all hospitals. Even more, due to finite-sample bias or other distributional shifts, whether the empirical distributions $\hat{\P}_i$ are a good representation of the underlying data-generating method also comes into question. To model this uncertainty, for each hospital $i$, we construct an ambiguity set that is a Wasserstein ball of radius $\epsilon_i \geq 0$ around $\hat{\P}_i$, denoted by $\B_{\epsilon_i}(\hat{\P}_i)$; see Section \ref{sec:background}. The radius of the ambiguity set $\epsilon_i$ is selected by each hospital individually, and represents how robust and conservative this hospital wants to be when computing its loss.
% We denote the ambiguity set as $B_{\epsilon_i}(\hat{\P}_i)$ and construct it as 
% $B_{\epsilon_i}(\hat{\P}_i) = \bigl\{ \P: W_2(\P, \hat{\P}_i) \leq \epsilon_i \bigr\}$.
We argue that the mechanism for dividing hospitals into coalitions implemented by the lead hospital should take into account these ambiguity sets.
% The lead hospital's main objective is for all hospitals to benefit from this collaboration while considering finite-sample uncertainty. 
Recall that if a hospital $i$ belongs to coalition $\pi(i)=S_k$, then it receives model parameters $\theta_{S_k}$. Using these model parameters, hospital $i$ can also compute an upper bound on its expected loss by calculating the worst-case loss over all the distributions in its ambiguity set, i.e.,
\begin{equation}
    \small
    \ell_i^{\text{rob}}(\btheta,\pi) = \textstyle\sup_{\Q_i \in B_{\epsilon_i}(\hat{\P}_i)} \E_{(x_i, y_i)\sim \Q_i}\bigl[ \mathcal{L}(f(x_i;\theta_{S_k}), y_i) \bigr].
\end{equation}
Therefore, to address the aforementioned challenges, the lead hospital should form clusters that minimize the accumulated \textit{worst possible} loss across all hospitals. We translate this objective to the following optimization problem.

\begin{problem}\label{prob:main_problem}
     Given $N$ hospitals, $K$ coalitions, a loss function $\mathcal{L}$, and an ambiguity set $B_{\epsilon_i}(\hat{\P}_i)$ for each hospital $i$, compute a coalition structure $\pi:\{1,\dots,N\}\to \{S_1, \dots, S_K\}$ that minimizes the expected robust loss of all hospitals, i.e.,
    \begin{align*}
        \begin{aligned}
            \min_{\pi} & \quad  \textstyle\sum_{k=1}^K  \textstyle\sum_{i \in S_k} \ell_i^{\text{rob}} (\btheta,\pi)  \\
            \text{s.t.} &\quad S_1 \cup S_2 \cup \dots S_K = \{1,\dots,N\} \\
            &\quad S_{k} \cap S_{k^{\prime}} = \emptyset, \quad \forall k \neq k^{\prime} \\
            &\quad S_{k} \neq \emptyset, \quad \forall k \in \{1,\dots,K\}.
        \end{aligned}
    \end{align*}
\end{problem}

%%%%%%%%%%%%%%%%%%%%%%%%%%%%%%%%%%%%%%%%%%%%%%%%%%%%%%%%%%%%%%%%%%%%%%%%%%%%

\vspace{0.5cm}
\section{Coalition Formation} \label{sec:coalition}
%%%%%%%%%%%%%%%%%%%%%%%%%%%%%%%%%%%%%%%%%%%%%%%%%%%%%%%%%%%%%%%%%%%%%%%%%%%%

%\vspace{0.2cm}

%As mentioned above, the lead hospital's main objective is to cluster hospitals into coalitions in a way that benefits them all, while also taking into account any uncertainty that hospitals might have about their true underlying data distribution. 
In order to solve Problem \ref{prob:main_problem}, we introduce a binary variable $a_{i,k} \in \{0,1\}$  such that $a_{i,k} = 1$ if hospital $i$ is assigned to coalition $S_k$ and $a_{i,k} = 0$ otherwise. Then, the coalition formation Problem \ref{prob:main_problem} can be reformulated into an integer program (IP) with binary decision variables $\{a_{i,k}\}_{N\times K}$ as
\begin{align}
\begin{aligned}
    \min_{a_{i,k}}&\quad \textstyle\sum_{k=1}^K \textstyle\sum_{i=1}^N a_{i,k} \ell_i^{\mathrm{rob}}(\btheta,\pi) \\
    \text{s.t.} &\quad \textstyle\sum_{k=1}^K a_{i,k} = 1, \quad i = 1,\dots,N \\
   &\quad \textstyle\sum_{i=1}^N a_{i,k} \geq 1, \quad  k =1,\dots,K\\
    &\quad a_{i,k} \in \{0,1\}, \quad  i = 1,\dots,N ,~k = 1,\dots,K.
\end{aligned} \label{eq:integer}
\end{align}

\subsection{Linear Integer Relaxation}

In order to solve the optimization problem \eqref{eq:integer}, we need to compute the robust loss $\ell_i^{\mathrm{rob}}(\btheta,\pi)$ of hospital $i$ for all possible aggregated models of all possible coalition structures $\pi$. This is a combinatorial problem that is very hard to solve. To address this challenge, 
we assume that the loss function $\mathcal{L}$ is convex with respect to the model parameters $\btheta$. As shown in the following result, this assumption allows us to relax problem \eqref{eq:integer} into a linear integer program that can be efficiently solved.

\begin{lemma} \label{lemma:upper-bounds}
    Suppose a parametric model $f(x;\theta)$ defined by model parameters $\theta$ and suppose that $\mathcal{L}(f(x;\theta), y)$ is a loss function that is convex with respect to $\theta$. Suppose also binary variables $a_{i,k}  \in  \{0, 1\}$, for all $i \in \{1,\dots,N\},~k \in \{1,\dots,K\}$, with $\textstyle\sum_{k=1}^{K} a_{i,k} = 1$ for all $i \in \{1,\dots,N\}$, that denote whether hospital $i$ belongs to coalition $S_k$. 
    % Finally, suppose $\pi(i) = k$. 
    Then,
    \begin{align*}
        \begin{aligned}
        &\textstyle\ell_i^{rob} (\btheta, \pi) \leq\\ &\textstyle\sum_{j=1}^N \frac{a_{j,k}}{\textstyle\sum_{j=1}^N a_{j,k}} \sup_{\Q_i \in B_{\epsilon_i}(\hat{\P}_i)} \E_{(x,y)\sim \Q_i} \bigl[ \mathcal{L}(f(x;\theta_{j}), y ) \bigr].
        \end{aligned}
    \end{align*}
\end{lemma}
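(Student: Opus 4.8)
The plan is to exploit the convexity of the loss in $\theta$ together with the fact that the aggregated parameter $\theta_{S_k}$ is itself a convex combination of the local parameters of the coalition members. First I would fix the coalition $S_k = \pi(i)$ containing hospital $i$, so that $a_{i,k}=1$, and introduce the weights $w_j := a_{j,k}/\textstyle\sum_{j'=1}^N a_{j',k}$. Since the $a_{j,k}$ are binary and the denominator equals $|S_k|$, these weights satisfy $w_j \geq 0$ and $\textstyle\sum_{j=1}^N w_j = 1$, with $w_j > 0$ precisely when $j \in S_k$. Rewriting the aggregation rule \eqref{eq:agg_theta} in terms of the indicator variables then shows $\theta_{S_k} = \textstyle\sum_{j=1}^N w_j\, \theta_j$, i.e., $\theta_{S_k}$ is a genuine convex combination of the local models.

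Next, for any fixed data point $(x,y)$, I would apply Jensen's inequality to the map $\theta \mapsto \mathcal{L}(f(x;\theta),y)$, which is convex by hypothesis. This yields the pointwise bound
\begin{align*}
\mathcal{L}(f(x;\theta_{S_k}),y) = \mathcal{L}\Bigl(f\bigl(x;\textstyle\sum_{j} w_j \theta_j\bigr),y\Bigr) \leq \textstyle\sum_{j} w_j\, \mathcal{L}(f(x;\theta_j),y).
\end{align*}
Because this holds for every realization $(x,y)$ and the expectation is monotone and linear, integrating against any $\Q_i \in B_{\epsilon_i}(\hat{\P}_i)$ preserves the inequality, giving
\begin{align*}
\E_{(x,y)\sim\Q_i}[\mathcal{L}(f(x;\theta_{S_k}),y)] \leq \textstyle\sum_{j} w_j\,\E_{(x,y)\sim\Q_i}[\mathcal{L}(f(x;\theta_j),y)].
\end{align*}

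Finally I would take the supremum over $\Q_i \in B_{\epsilon_i}(\hat{\P}_i)$ on the left-hand side — which by definition is exactly $\ell_i^{\mathrm{rob}}(\btheta,\pi)$ — and invoke subadditivity of the supremum, namely $\sup_{\Q_i}\textstyle\sum_j w_j\, g_j(\Q_i) \leq \textstyle\sum_j w_j \sup_{\Q_i} g_j(\Q_i)$, which is valid because the weights are nonnegative. Pushing the supremum inside the sum yields the claimed bound. The step I expect to be genuinely operative rather than merely formal is this last relaxation: a single worst-case distribution $\Q_i$ shared by all terms is replaced by a possibly different maximizer for each $\theta_j$, which is what loosens the inequality but, crucially, decouples the robust loss of the aggregated model into the individually computable robust losses $\sup_{\Q_i}\E_{(x,y)\sim\Q_i}[\mathcal{L}(f(x;\theta_j),y)]$ against each local model. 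These per-model robust losses are precisely the quantities rendered tractable by Lemmas~\ref{lem:dual}--\ref{lem:logistic}, so it is worth flagging in the write-up that the bound is in general not tight; its value lies in converting a combinatorial search over aggregated models into a linear objective in the assignment weights.
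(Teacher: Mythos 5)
Your proof is correct and follows essentially the same route as the paper's: Jensen's inequality applied to the convex combination $\theta_{S_k} = \textstyle\sum_{j} w_j \theta_j$, then linearity and monotonicity of the expectation, then exchanging the supremum with the weighted sum. If anything, you are slightly more careful than the paper at the final step, which the paper writes as an equality where, as you correctly flag, pushing the supremum inside the sum is in general only a $\leq$ (though this does not affect the validity of the overall bound).
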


\vspace{0.3cm}

\begin{proof}
    See Appendix.
\end{proof}

\vspace{0.1cm}

Define the loss
\begin{equation}\label{eq: robust_loss}
    L_{i,j}(\btheta) = \textstyle\sup_{\Q_i \in B_{\epsilon_i}(\hat{\P}_i)} \E_{(x,y)\sim \Q_i} \bigl[ \mathcal{L}(f(x;\theta_{j}), y ) \bigr],
\end{equation}
which we interpret as the wort-case loss obtained when hospital $i$ evaluates the model parameters of hospital $j$ over the ambiguity set centered at the empirical distribution of hospital $i$. In other words, $L_{i,j}(\btheta)$ is the robust transfer loss. Then, we have the following result.

\begin{theorem}\label{thm:1}
As before, suppose a parametric model $f(x;\theta)$ defined by model parameters $\theta$ and suppose that $\mathcal{L}(f(x;\theta), y)$ is a loss function that is convex with respect to $\theta$. Then, for
the optimal value of problem \eqref{eq:integer} we have that it is upper bounded by: 
    \begin{align*}
        \begin{aligned}
      \textstyle\sum_{i,j,k=1}^{N,N,K} \frac{a_{i,k}\cdot a_{j,k}}{\textstyle\sum_{j = 1}^N a_{j,k}} L_{i,j}(\btheta).
        \end{aligned}
    \end{align*}
\end{theorem}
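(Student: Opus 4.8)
The plan is to derive the bound by inserting the per-hospital inequality of Lemma \ref{lemma:upper-bounds} into the objective of the integer program \eqref{eq:integer}, and then summing over all hospitals and coalitions while carefully accounting for the assignment variables $a_{i,k}$.

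First, I would fix an arbitrary feasible assignment $\{a_{i,k}\}$, i.e.\ one satisfying $\sum_{k=1}^K a_{i,k}=1$ for every $i$ and $\sum_{i=1}^N a_{i,k}\geq 1$ for every $k$; the latter constraint guarantees that the normalizing denominator $\sum_{j=1}^N a_{j,k}$ (the size of coalition $S_k$) is strictly positive, so all fractions below are well defined. This assignment induces a coalition structure $\pi$. For each hospital $i$ belonging to coalition $S_k$ (so that $a_{i,k}=1$), Lemma \ref{lemma:upper-bounds} together with the definition of $L_{i,j}(\btheta)$ in \eqref{eq: robust_loss} yields $\ell_i^{\mathrm{rob}}(\btheta,\pi) \leq \sum_{j=1}^N \frac{a_{j,k}}{\sum_{j=1}^N a_{j,k}} L_{i,j}(\btheta)$.

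The central bookkeeping step is to bound the objective $\sum_{k=1}^K\sum_{i=1}^N a_{i,k}\,\ell_i^{\mathrm{rob}}(\btheta,\pi)$ term by term. I would show that, for every pair $(i,k)$, $a_{i,k}\,\ell_i^{\mathrm{rob}}(\btheta,\pi) \leq \sum_{j=1}^N \frac{a_{i,k}\,a_{j,k}}{\sum_{j=1}^N a_{j,k}} L_{i,j}(\btheta)$. Because $a_{i,k}\in\{0,1\}$, this splits into two cases: if $a_{i,k}=0$ both sides vanish, while if $a_{i,k}=1$ the coalition index $k$ is precisely the one containing $i$, so the inequality is exactly the one from the previous step multiplied through by $a_{i,k}=1$. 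Summing over $i$ and $k$ and relabeling the summation indices then produces the claimed expression $\sum_{i,j,k} \frac{a_{i,k} a_{j,k}}{\sum_{j=1}^N a_{j,k}} L_{i,j}(\btheta)$.

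Finally, to convert this pointwise statement into a bound on the optimal value of \eqref{eq:integer}, I would take the minimum over all feasible $a$ on both sides: since the inequality holds for \emph{every} feasible assignment, the minimal objective of \eqref{eq:integer} is at most the right-hand expression evaluated at its own minimizer, which is precisely the claimed upper bound. I expect the only delicate point to be the case analysis on $a_{i,k}$ — specifically verifying that the multiplicative factor $a_{i,k}$ correctly selects the coalition $S_k$ containing $i$, so that the denominator $\sum_{j=1}^N a_{j,k}$ indeed refers to the size of that coalition; everything else reduces to routine rearrangement of finite sums.
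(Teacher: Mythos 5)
Your proposal is correct and follows essentially the same route as the paper, whose proof is simply the one-line instruction to apply Lemma \ref{lemma:upper-bounds} directly to \eqref{eq:integer}; your case analysis on $a_{i,k}\in\{0,1\}$ and the final minimization over feasible assignments just make explicit the bookkeeping that the paper leaves implicit.
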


\vspace{0.3cm}

\begin{proof}
    Apply Lemma \ref{lemma:upper-bounds} directly to \eqref{eq:integer}.
\end{proof}

\vspace{0.1cm}

Replacing the objective function of the coalition formation problem \eqref{eq:integer} by the upper bound in Theorem \ref{thm:1} and introducing the binary variables $a_{i,j,k} \in \{0, 1\}$, for all $i \in \{1, \cdots, N\}$, $j \in \{1, \cdots, N\}$, $k \in \{1, \cdots, K\}$, so that $a_{i,j,k} = 1$ if $a_{ik} = 1 \text{ and } a_{jk} = 1$, the coalition formation problem \eqref{eq:integer} can be relaxed to the following linear integer fractional program:
\begin{align}
\begin{aligned}
    \min_{a_{i,k}}&\quad      \textstyle\sum_{i,j,k=1}^{N,N,K} \frac{a_{i,j,k}}{\textstyle\sum_{j = 1}^N a_{j,k}} L_{i,j}(\btheta) \\
    \text{s.t.} &\quad \textstyle\sum_{k=1}^K a_{i,k} = 1, \;\; i \in \{1..N\} \\
   &\quad \textstyle\sum_{i=1}^N a_{i,k} \geq 1, \;\;  k \in\{1..K\}\\
   &\quad a_{i,j,k} \leq a_{i,k},\;\; i, j \in \{1..N\} ,~k \in \{1..K\}\\
   &\quad a_{i,j,k} \leq a_{j,k},\;\; i, j \in \{1..N\} ,~k \in \{1..K\}\\
   &\quad a_{i,k} + a_{j,k} - a_{i,j,k} -1 \leq 0,\;\; i, j \!\in\! \{1..N\},k \!\in\! \{1..K\}\\
    &\quad a_{i,k} \in \{0,1\}, \;\;  i \in \{1..N\} ,~k \in \{1..K\}\\
    &\quad a_{i,j,k} \in \{0,1\}, \;\;  i, j \in \{1..N\} ,~k \in \{1..K\}.
\end{aligned} \label{eq:coal_final}
\end{align}

In what follows, we analyze how to compute the robust transfer loss in \eqref{eq: robust_loss} for two specific model classes: linear models with $\ell_1$-loss and logistic regression.

\subsection{Example: $\ell_1$-linear regression}

In the case of linear models, the robust transfer loss in \eqref{eq: robust_loss} can be written as
\begin{align}
    L_{i,j}(\btheta) = \textstyle\sup_{\Q_i \in B_{\epsilon_i}(\hat{\P}_i)} \E_{(x,y)\sim \Q_i} \bigl[ |\theta_{j}^{\top} x - y | \bigr]. \label{eq:transfer-linear}
\end{align}

To compute this loss, we can use the dual formulation for linear regression with $\ell_1$-loss given in Lemma \ref{lem:linear}. More specifically, for fixed model parameters $\btheta$ we get
\begin{align}
\label{eq:proposition-epsilon}
    L_{i,j}(\btheta) = \epsilon_i \cdot \alpha + \E_{(x,y)\sim \hat{\P}_i}\bigl[ |\theta_{j}^{\top} x - y | \bigr],
\end{align}
where $\alpha$ is a decision variable in the dual formulation; see Eq.~\eqref{eq:lemma-2}. Substituting the robust losses of every model $\theta_j$ and every hospital $i$ into the coalition formation problem \eqref{eq:coal_final}, we can obtain the desired optimal coalition structure.
%We can then replace the robust transfer loss with its equivalent formulation in the objective of the coalition formation method in Theorem 1.

Finally, recall that every hospital is responsible for selecting the value for the radius of its ambiguity set $\epsilon_i$. If all hospitals select the same radius for their ambiguity sets, the following result holds true.
\begin{proposition}
    Assume all hospitals use the same value $\epsilon$
 for the radius of their ambiguity sets. Then, the coalition structure returned by the solution of the coalition formation problem \eqref{eq:coal_final} is independent of $\epsilon$.

    \begin{proof}
        See Appendix
    \end{proof}
\end{proposition}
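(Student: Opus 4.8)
The plan is to show that the optimal coalition structure returned by \eqref{eq:coal_final} does not depend on the common radius $\epsilon$ by tracking how $\epsilon$ enters the objective. The key observation is the closed form for the linear-regression transfer loss in \eqref{eq:proposition-epsilon}: when every hospital uses $\epsilon_i = \epsilon$, we have $L_{i,j}(\btheta) = \epsilon\,\alpha_{i,j} + \E_{(x,y)\sim \hat{\P}_i}[\,|\theta_j^\top x - y|\,]$, where $\alpha_{i,j}$ is the optimal value of the dual variable $\alpha$ in \eqref{eq:lemma-2} for the pair $(i,j)$. First I would note, from the constraint $\lVert\theta\rVert_2^2 + 1 \leq \alpha^2$ together with minimization of $\alpha\epsilon$ (with $\alpha \geq 0$, $\epsilon > 0$), that the optimal $\alpha_{i,j}$ is attained at the boundary, so $\alpha_{i,j} = \sqrt{\lVert \theta_j \rVert_2^2 + 1}$. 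The crucial point is that $\alpha_{i,j}$ depends only on $\theta_j$ and is completely independent of $\epsilon$ and of the hospital index $i$; write $\alpha_j := \sqrt{\lVert\theta_j\rVert_2^2+1}$.

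Next I would substitute this into the objective of \eqref{eq:coal_final}. The objective becomes
\begin{align*}
\textstyle\sum_{i,j,k=1}^{N,N,K} \frac{a_{i,j,k}}{\textstyle\sum_{j=1}^N a_{j,k}}\Bigl(\epsilon\,\alpha_j + \E_{(x,y)\sim\hat{\P}_i}\bigl[|\theta_j^\top x - y|\bigr]\Bigr).
\end{align*}
I would split this into two sums. The second sum is the $\epsilon$-free part and is exactly the objective one would minimize at $\epsilon = 0$. For the first sum, the plan is to argue that it evaluates to the same constant for \emph{every} feasible coalition structure, so it cannot influence the argmin. The term to analyze is $\epsilon \sum_{i,j,k} \frac{a_{i,j,k}}{\sum_{j} a_{j,k}}\alpha_j$. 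The main step is to recognize that $\sum_i a_{i,j,k} = a_{j,k}\cdot(\text{size of coalition }k)$ once the linking constraints force $a_{i,j,k} = a_{i,k} a_{j,k}$; more directly, summing $a_{i,j,k}$ over $i$ gives $a_{j,k}\sum_{i} a_{i,k} = a_{j,k}\bigl(\textstyle\sum_{i} a_{i,k}\bigr)$, and since the denominator is precisely $\sum_{i\in\{1..N\}} a_{i,k}$, the factor $\frac{\sum_i a_{i,j,k}}{\sum_i a_{i,k}}$ collapses to $a_{j,k}$.

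Carrying this out, the first sum reduces to $\epsilon \sum_{j,k} a_{j,k}\,\alpha_j = \epsilon \sum_{j} \alpha_j \bigl(\sum_k a_{j,k}\bigr) = \epsilon \sum_{j=1}^N \alpha_j$, where the last equality uses the assignment constraint $\sum_k a_{j,k} = 1$. This quantity is a fixed constant $\epsilon\sum_{j}\sqrt{\lVert\theta_j\rVert_2^2+1}$ that does not depend on the assignment variables at all. Therefore the objective of \eqref{eq:coal_final} equals a structure-independent constant plus the $\epsilon$-free transfer-loss term, and minimizing over $\pi$ returns the same minimizer regardless of $\epsilon$; scaling the additive constant by $\epsilon$ changes the optimal value but never the optimal coalition structure.

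The step I expect to require the most care is the collapse $\frac{\sum_i a_{i,j,k}}{\sum_i a_{i,k}} = a_{j,k}$, since the $a_{i,j,k}$ are only tied to the products $a_{i,k}a_{j,k}$ through inequality constraints rather than an exact equality. The honest thing is to verify that at any \emph{optimal} (indeed at any vertex) solution the linking constraints are tight enough that $a_{i,j,k} = a_{i,k}a_{j,k}$ holds, so that $\sum_i a_{i,j,k} = a_{j,k}\sum_i a_{i,k}$ exactly; this is the only place where the combinatorial structure of the relaxation, rather than pure algebra, is invoked. Once that identity is in hand, the rest is the bookkeeping above, and the conclusion follows since an additive constant independent of the decision variables leaves the $\argmin$ unchanged.
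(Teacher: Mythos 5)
Your proof is correct and follows essentially the same route as the paper: both use the dual closed form $L_{i,j}(\btheta)=\epsilon\,\alpha+\E_{(x,y)\sim\hat{\P}_i}[\,|\theta_j^\top x-y|\,]$ with $\alpha=\sqrt{\lVert\theta_j\rVert_2^2+1}$ independent of $\epsilon$, and then observe that the $\epsilon$-dependent part of the objective of \eqref{eq:coal_final} collapses to the assignment-independent constant $\epsilon\sum_j\sqrt{\lVert\theta_j\rVert_2^2+1}$, leaving the $\argmin$ unchanged. Your write-up actually supplies the telescoping step (summing $a_{i,j,k}$ over $i$ and cancelling against the denominator via $a_{i,j,k}=a_{i,k}a_{j,k}$, which holds at \emph{every} feasible binary point, not only at vertices) that the paper's proof merely asserts.
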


\subsection{Example: Logistic Regression}

In the case of logistic regression models, the robust transfer loss in \eqref{eq: robust_loss} can be written as
\begin{align}
    L_{i,j}(\btheta) \!=\! \textstyle\sup_{\Q_i \in B_{\epsilon_i}(\hat{\P}_i)} \E_{(x,y)\sim \Q_i} \bigl[ \log(1 + \exp(-y \cdot \theta_j^Tx)) \bigr].\label{eq:transfer-logistic}
\end{align}

As in the case of linear models discussed above, here too we can use the dual formulation for logistic regression given in Lemma \ref{lem:logistic} to compute the robust loss in \eqref{eq:transfer-logistic}. More specifically, for fixed model parameters $\btheta$ we get
\begin{align}
    L_{i,j}(\btheta) = \epsilon_i \cdot \alpha + \frac{1}{N}\textstyle\sum_{i=1}^N b_i,
\end{align}
where both $\alpha$ and $b_i$ are decision variables in the dual formulation; see Eq.~\eqref{eq:lemma-3}. Like in the linear case, substituting the robust losses of every model $\theta_j$ and every hospital $i$, into the coalition formation problem \eqref{eq:coal_final},
we can obtain the desired optimal coalition structure.

%%%%%%%%%%%%%%%%%%%%%%%%%%%%%%%%%%%%%%%%%%%%%%%%%%%%%%%%%%%%%%%%%%%%%%%%%%%%%%%%%%%%
\section{Federated Learning Protocol} \label{sec:protocol}
%%%%%%%%%%%%%%%%%%%%%%%%%%%%%%%%%%%%%%%%%%%%%%%%%%%%%%%%%%%%%%%%%%%%%%%%%%%%%%%%%%%%

\begin{figure*}[h]
\centering
\begin{subfigure}[b]{0.45\textwidth}
    \includegraphics[width=1\textwidth]{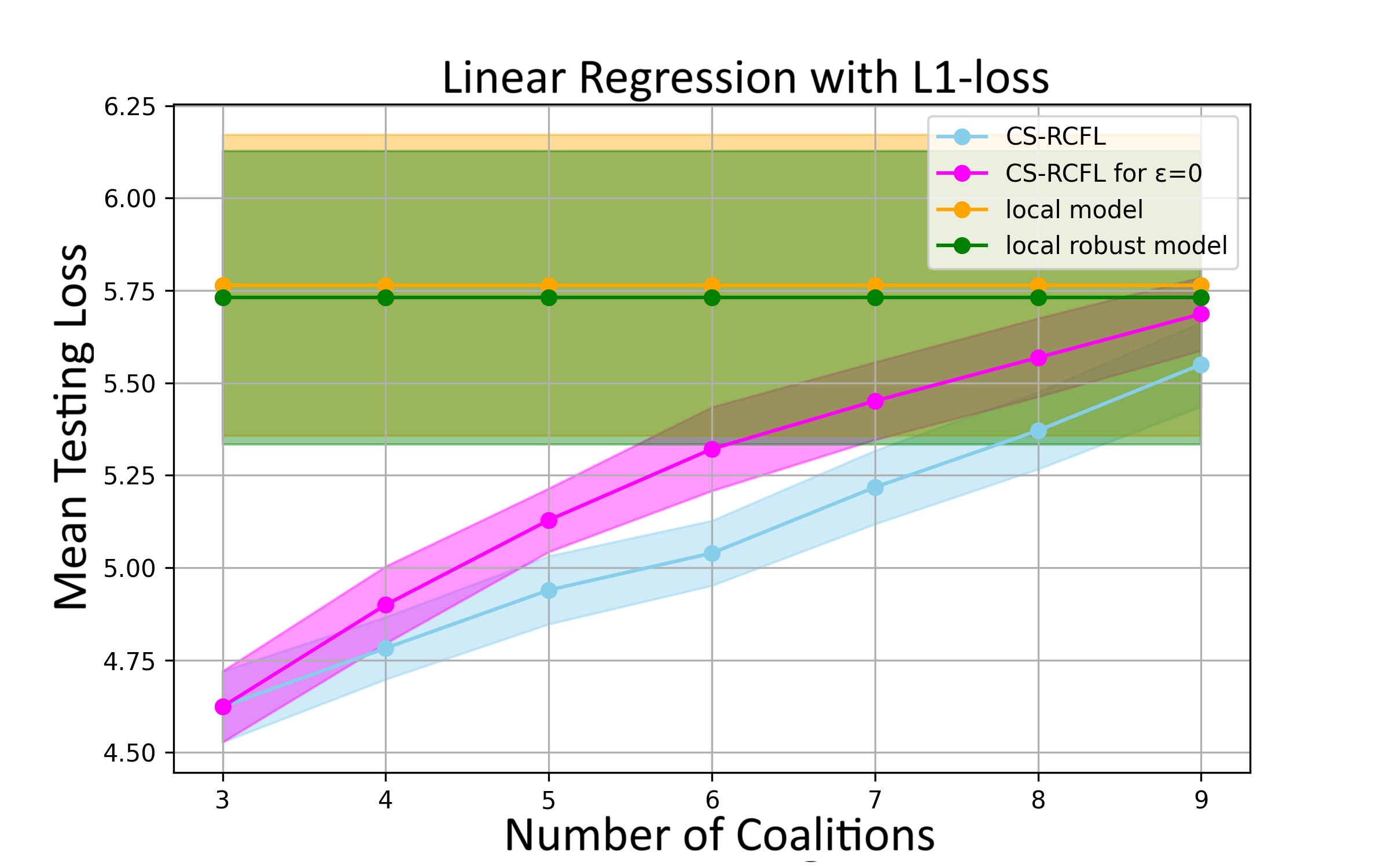}
    \caption{}
    \label{fig:left_plot}
\end{subfigure}%
\begin{subfigure}[b]{0.45\textwidth}
    \includegraphics[width=1\textwidth]{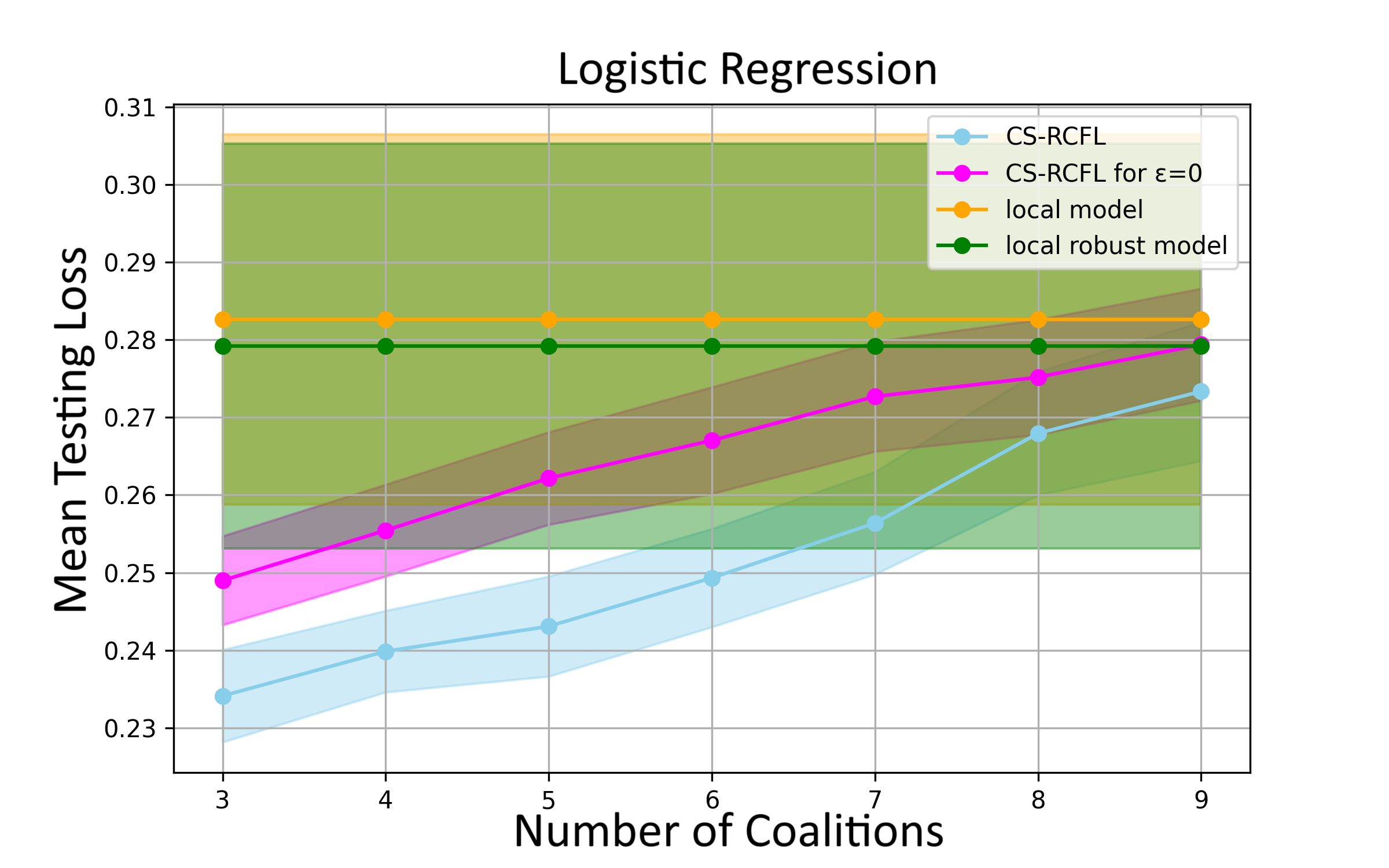}
    \caption{}
    \label{fig:right_plot}
\end{subfigure}%
\caption{\footnotesize Loss of the CS-RCFL method and the benchmarks for (a) linear regression models with absolute ($\ell_1$-) loss and (b) logistic regression models.}
\label{fig:combined_results}
\end{figure*}

In this section, we discuss a protocol that describes how the proposed clustered federated learning (CFL) method can be implemented. 

\begin{enumerate}
    \item \emph{Training Phase}: Initially, the hospitals use their local data to learn estimates of their local model parameters $\theta_i^{\text{local}}$. Then, they share their estimated local model parameters with the lead hospital and the coalition formation phase begins.
    \item \emph{Communication Phase}: To solve the coalition formation problem  \eqref{eq:coal_final}, the lead hospital requires a conservative estimate of the loss of every local model computed on the empirical distribution of every hospital, which is the robust transfer loss defined in \eqref{eq: robust_loss}. To obtain these robust losses, the lead hospital sends to all participating hospitals the local model parameters it has received, without revealing to which hospital each parameter set belongs to; this way information privacy is maintained as no hospital can determine which hospital trained which model.
    When the hospitals have received the model parameters of all other hospitals, they compute for each model the worst expected loss over all patient distributions in their local ambiguity set and send these values to the lead hospital.%Furthermore, we assume that all the hospitals that participate in the federation have agreed with the terms of the protocol and are obliged to follow its norms. 
    \item \emph{Coalition Phase}: When the lead hospital has received the worst-case expected losses of all models from all hospitals, it can compute the optimal coalition structure by solving the coalition formation problem \eqref{eq:coal_final}. Then, the lead hospital can compute the aggregate model parameters for each coalition by \eqref{eq:agg_theta} and send these aggregate parameters to the member hospitals of each coalition.
\end{enumerate}

\begin{comment}
\begin{algorithm} [t]
\caption{CS-RCFL} \label{alg:fed_learn}
\begin{algorithmic}[1]
\State Hospitals obtain an estimate of their model's local parameters $\theta_i^{\text{local}}$ based on their empirical distributions $\hat{\P}_i$ as in \eqref{eq:erm}.
\State Hospitals share $\theta_i^{\text{local}}$ with the lead hospital.
\State The lead hospital encrypts the models' ownership to protect privacy and broadcasts all local models to the hospitals.
\State Each hospital computes the worst-case expected loss for each model it received under the realization of its ambiguity set using Lemma \ref{lem:linear}. The hospitals can carry out these computations simultaneously.  
\State Each hospital sends its computed worst-case expected losses to the lead hospital.
\State The lead hospital computes the optimal coalition structure based on the received worst-case expected losses
according to \eqref{eq:coal_final}.
\State The lead hospital computes the model parameters of each cluster as in \eqref{eq:agg_theta} and sends the aggregated parameters to the hospitals according to their assigned cluster.

\end{algorithmic}
\end{algorithm} 
\end{comment}
%%%%%%%%%%%%%%%%%%%%%%%%%%%%%%%%%%%%%%%%%%%%%%%%
\section{Experimental Results} \label{sec:experiments}
%%%%%%%%%%%%%%%%%%%%%%%%%%%%%%%%%%%%%%%%%%%%%%%%

\subsection{Synthetic Data}

We validate the effectiveness of the proposed CS-RCFL algorithm for linear regression models with absolute ($\ell_1$) loss and for logistic regression models.
% In medical applications, the use of linear models is extensive, since they can be simple to use and easy to interpret. At the same time, using linear regression with $\ell_1$-loss leads to more robust models that tend to automatically perform feature selection by setting most of the weights to zero, both of which are usually important in a medical case. Additionally, logistic regression is another machine learning technique that is extensively used in medical applications, especially the ones that require defining the probability of specific events happening (e.g. heart attacks). 
We compare the performance of CS-RCFL to three benchmarks:
\begin{itemize}
    \item The local models that the hospitals can learn from their local data, without considering distribution shifts or collaboration with other hospitals.
    \item The robust local models that the hospitals can learn by minimizing their robust local loss without collaborating with other hospitals.
    \item The non-robust clustered federated learning models that the hospitals can learn, without considering distribution shifts, i.e.,~using the non-robust losses to perform the clustering. This benchmark coincides with the proposed CS-RCFL algorithm for $\epsilon = 0$.
\end{itemize}

Below we describe the synthetic datasets we generate for the linear regression and logistic regression models. These datasets are different because the first task is a regression problem whereas the second one is a classification problem.
\begin{itemize}
    \item \emph{Linear Regression Dataset.} The dataset consists of $N=10$ hospitals. The number of samples $n$ of each hospital ranges between $50$ and $150$, emulating hospitals with various sizes of patient populations. Every data sample has $50$ features. For each hospital $i$, we generate the observations as $y_i^{(p)} = \tilde{w}_i^{\top} x_i^{(p)} + \eta$, where $x_i \sim \mathcal{N}(\boldsymbol{\mu}_i,\Sigma_i)$ is sampled from a multi-variate normal distribution with random covariance matrix and mean, $\eta \sim \mathcal{N}(0,\sigma^2)$ is an i.i.d.~noise with standard-deviation $\sigma = 5$, and $\tilde{w}_i$ is the true weight of the model. To simulate hospitals with similar joint distributions (that can be clustered together), we select $3$ different values for $\tilde{w}$ and assign each hospital to one of the $3$ true weights.
    \item \emph{Logistic Regression Dataset.} Like before, this dataset consists of $N=10$ hospitals and the number of samples $n$ of each hospital ranges between 100 and 200. Every data sample has $50$ features. For each hospital $i$, we generate the observations by the (log-loss) probability $P(y_i^{(p)} = 1|x_i^{(p)}) = [1 + \exp(-\tilde{w}_i^{\top} x_i^{(p)})]^{-1}$. We generate labels based on a threshold on the log-loss; if $P(y_i^{(p)} = 1|x_i^{(p)}) > 0.5$, then we set $y^{(p)} = 1$, otherwise $y^{(p)} =  -1$.
\end{itemize}

% \begin{figure}
%     \centering
%     \includegraphics[width=0.7\linewidth]{histograph.png}
%     \caption{Histogram displaying 3 underlying modes of the distribution of the entire dataset, distributed across hospitals.}
%     \label{fig:hist}
% \end{figure}

%linear : 1, 2
%logistic: 0.5, 1

In our proposed CS-RCFL method, we select the radii of the ambiguity sets of the hospitals depending on the number of samples each one of them has. If $n > 100$, we set $\epsilon = 1$ for linear models and $\epsilon = 0.5$ for logistic regression models. Otherwise, we use $\epsilon = 2$ for  linear models and $\epsilon = 1$ for logistic regression models. For the robust local models method, we use cross-validation to select the radii of the ambiguity sets. Specifically, for linear models we get the values of $\epsilon$ that we also use in the CS-RCFL algorithm, while for logistic regression models we get $\epsilon = 0.05$ if $n>100$ and $\epsilon = 0.1$, otherwise. Note that cross-validation is hard to implement in practice for the CS-RCFL algorithm as this would require coordination with the lead hospital. Instead, we selected conservative ambiguity sets (e.g., for logistic regression models) so that the  coalitions returned by CS-RCFL are robust to possibly large distribution shifts. 
%In general, the parameter $\epsilon$ is a hyperparameter and its value can be chosen, e.g., using cross-validation, as we do in the case of the robust local models, or by intuitively deciding on how many distribution shifts we want to account for.

%Note that cross validation is hard to implement in the case of CS-RCFL as this would require coordination with the lead hospital. Instead, we select conservative ambiguity sets so that the CS-RCFL models are robust to possibly large distribution shifts.

In what follows, we examine the performance of our algorithm for different numbers of coalitions $K \in \{3, 4,\dots, 10\}$. Specifically, we compare our proposed CS-RCFL method and the benchmarks described above on 10 testing datasets that contain 100 data-samples each, generated from the same distributions as the training data for each one of the $N=10$ hospitals. In Fig.~\ref{fig:combined_results} we report the loss of each method averaged over the 10 different test datasets. We observe that even without federated learning, the use of DRO slightly improves the performance of the local models, which is expected due to the presence of distribution shifts between the training and test data. On the other hand, our CS-RCFL method consistently outperforms the other benchmarks by achieving lower average model loss and lower variance. It is also worth emphasizing that the models returned by CS-RCFL have lower loss compared to those returned by the clustered federated learning method without robustness, showing that our method is more effective in addressing distribution shifts. Moreover, this improved performance is consistent for all values of $K$, meaning that our algorithm is robust to the choice of the numbers of coalitions, which is typically unknown. We finally note that our algorithm is able to consistently recover the true coalition structure in the case $K=3$.

\subsection{Real-world Healthcare Data}

\begin{figure*}[h]
\centering
\begin{subfigure}[b]{0.45\textwidth}
    \includegraphics[width=0.8\textwidth]{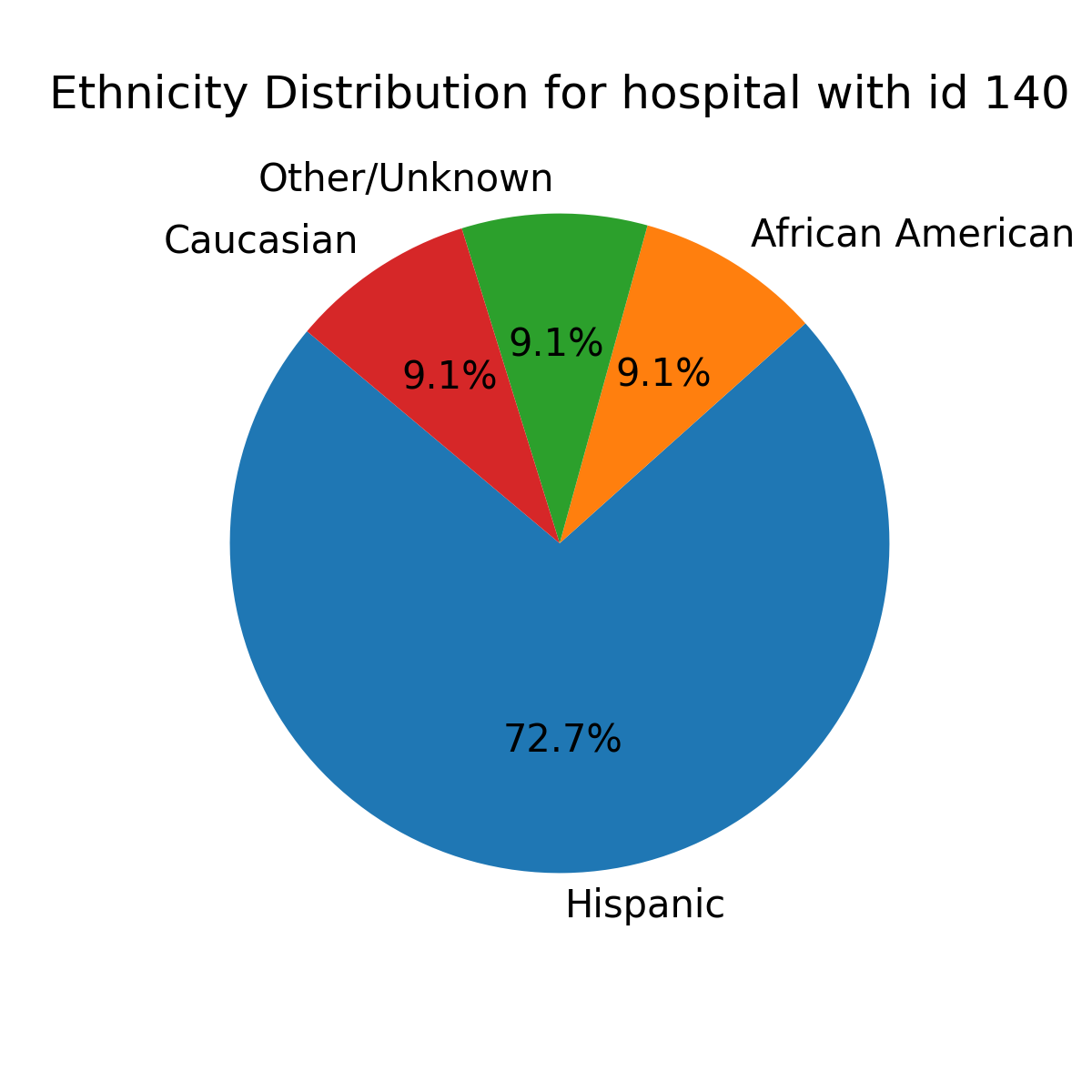}
    \label{fig:ethn_140}
\end{subfigure}%
\begin{subfigure}[b]{0.45\textwidth}
    \includegraphics[width=0.8\textwidth]{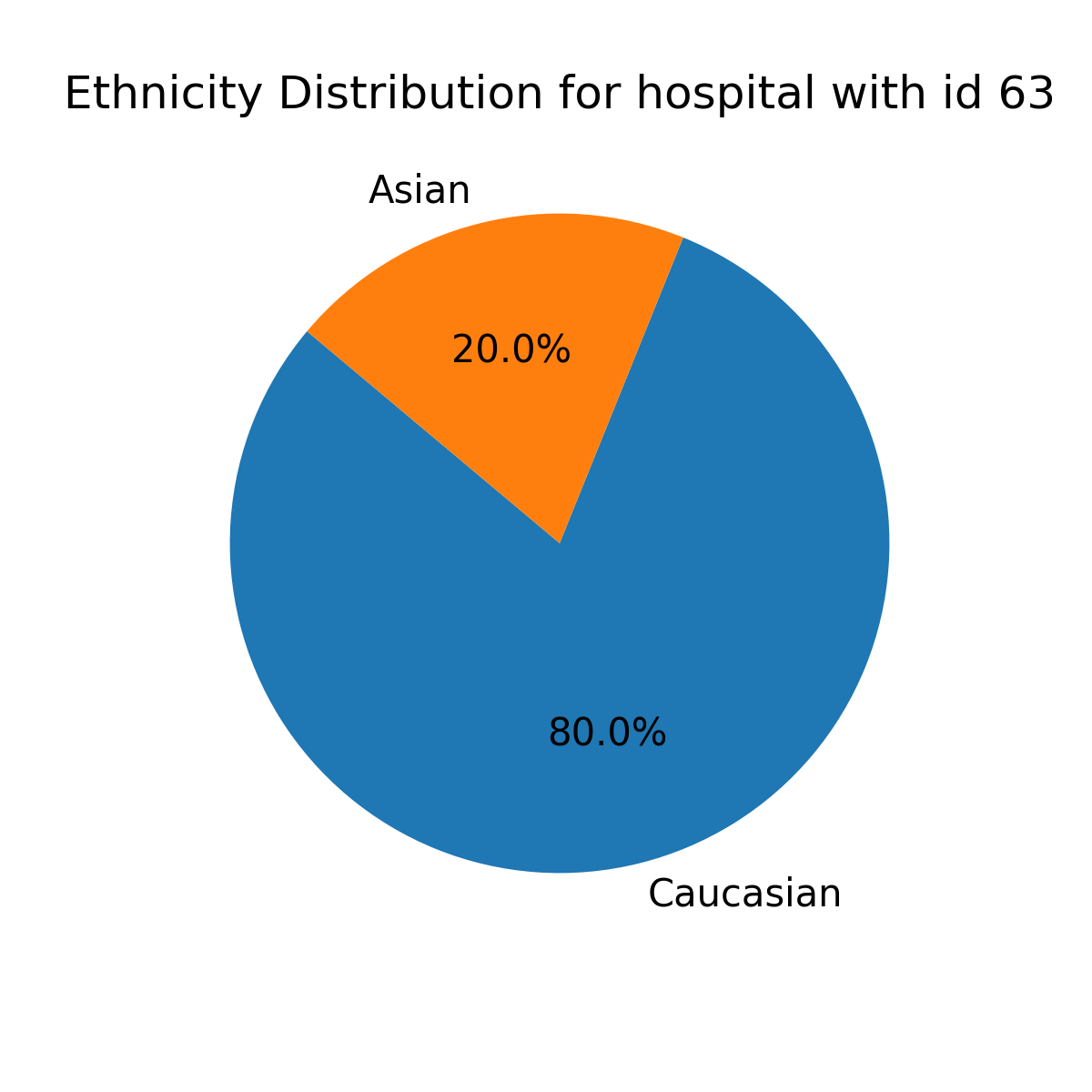}
    \label{fig:ethn_63}
\end{subfigure}%
\vspace{-30pt}
\caption{\footnotesize Distribution of patients' ethnicity at two different hospitals, an example that shows heterogeneity of patient populations across hospitals.}
\label{fig:patient_population}
\end{figure*}

%\begin{figure}[h]
%\centering
%\begin{tabular}{@{}c@{}}
%    \includegraphics[width=.8\linewidth]{figs/Ethnicity_140.png} 
%  \end{tabular}

%   \vspace{\floatsep}

%   \begin{tabular}{@{}c@{}}
%     \includegraphics[width=.8\linewidth]{figs/Ethnicity_63.png} 
%   \end{tabular}
% \caption{\footnotesize Distribution of patients' ethnicity at two different hospitals, an example that shows how heterogeneous patient populations across hospitals are.}
% \label{fig:patient_population}
% \end{figure}

\begin{figure}[h]
    \centering
    \includegraphics[width=1\linewidth]{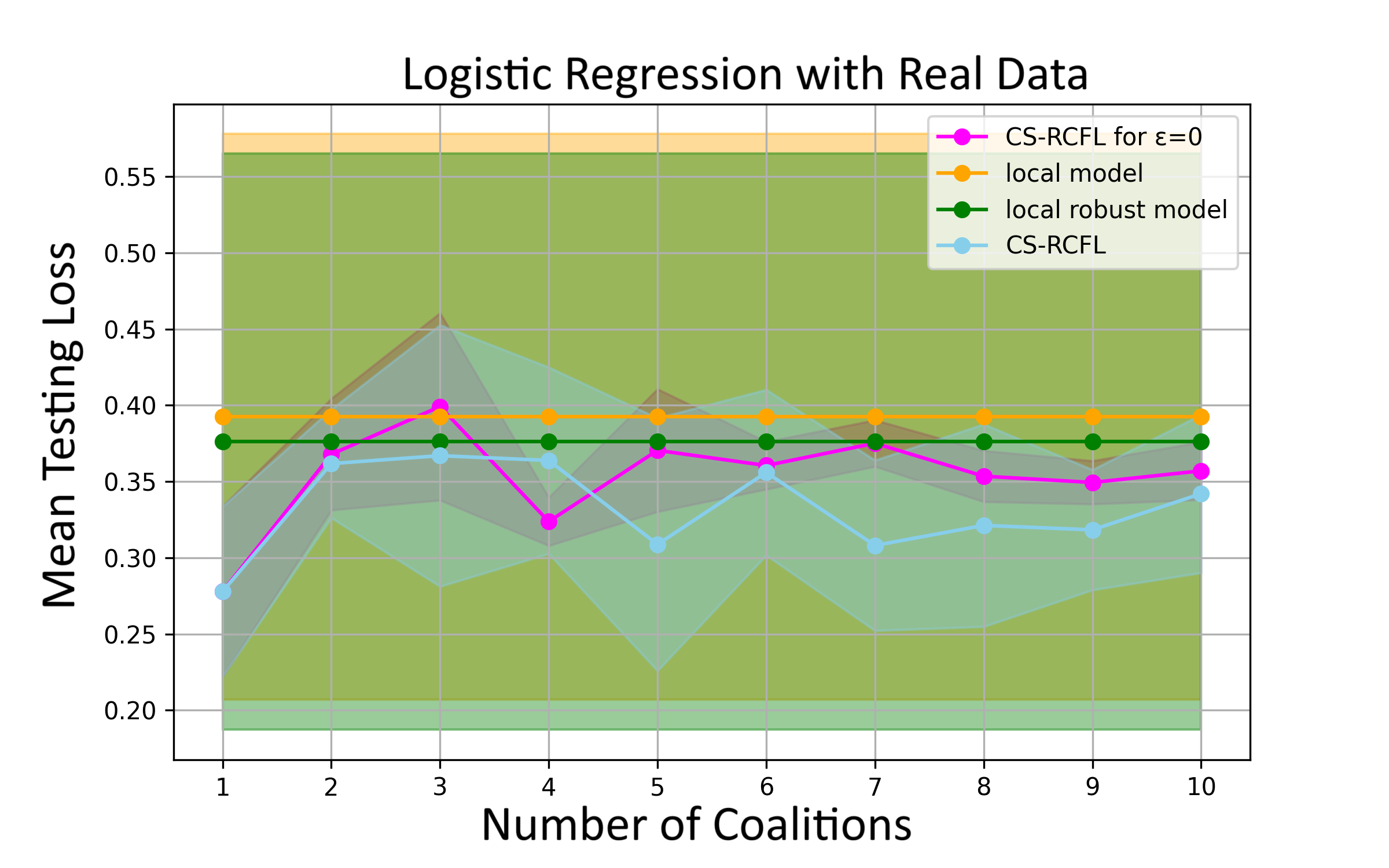}
    \caption{\footnotesize{Loss of the CS-RCFL method and the benchmarks for the logistic regression model, evaluated on the eICU Collaborative Research Dataset.}}
    \label{fig:real_log_results}
\end{figure}

Next, we validate our algorithm on real-world healthcare data. We use the open access demo version of the eICU Collaborative Research Database, a multi-center database comprised of de-identified health data of over 200,000 admissions to ICUs across the United States between 2014-2015\cite{pollard2018eicu}. Since not all patients in this dataset have the same number of attributes, we select a subset of attributes and construct a dataset with the patients that have those attributes in common. This way, we end up with data from 174 different hospitals with each hospital having between 3 and 18 patients admitted to ICU. For each one of these patients we have access to 57 different features, including demographic information (e.g., age, sex, ethnicity), vital sign measurements, severity of illness measures, and other diagnosis information (e.g., temperature, heart-rate, apachescore etc.). As seen in Fig. \ref{fig:patient_population}, the dataset consists of heterogeneous hospitals that serve different patient populations. 

An important feature contained in the dataset for every patient is their length-of-stay in the ICU. Knowledge of the length-of-stay in the ICU (similarly, knowledge of the utilization of other hospital resources, such as PACUs or step-down beds) can be used to streamline health system operations  and improve delivery of care. Therefore, in this experiment, we focus on ICU length-of-stay prediction. Specifically, to simplify the problem, we consider a logistic regression model that can predict whether a patient will stay in the ICU longer than 1 day or not. Since the number of data samples at each hospital is very low, especially as it relates to the large number of patient features, we consider hospitals that have at least 10 data samples each. Among those, we randomly select 20 hospitals to participate in the proposed federation. Moreover, for these 20 hospitals, we reduce the number of features we use to train the logistic regression models from 57 to 6. These 6 features include both demographic information (gender, ethnicity, and age) and illness measurements (medicines, heart-rate, and apachescore).

To decide the radius $\epsilon$ of the ambiguity set of each hospital we use cross-validation. We get values of $\epsilon$ that range between $[0.001, 0.1]$ across the different hospitals. Small values (i.e., $\epsilon=0.001$) indicate that the training and validation sets are similar and thus there are small distribution shifts in the data. On the other hand, larger values (i.e., $\epsilon=0.1$) mean that the training distribution is significantly different compared to the validation one. To validate our method, we split the data at every hospital into a training set containing 70\% of the data and a test set containing the rest. We then train our CS-RCFL model along with the other benchmarks on the training data and evaluate their loss on the corresponding test sets at the local hospitals. We construct 3 different experiments by randomly selecting 3 different training sets at each hospital and, in Fig. \ref{fig:real_log_results}, we report the average loss of all models (averaged over the 3 experiments) for various coalition number values $K \in \{1,\dots, 10\}$. We observe that the fact that hospitals have very few data affects the performance of the local models; both the local and local robust models suffer from high loss and high variance. Moreover, we observe that our proposed CS-RCFL model (with either $\epsilon=0$ or $\epsilon>0$) outperforms the models trained locally at each hospital both in average loss and in terms of variance. Finally, we see that incorporating robustness, i.e., letting $\epsilon>0$, can on average improve the loss of a clustered federated model.

%%%%%%%%%%%%%%%%%%%%%%%%%%%%%%%%%%%%%%%%%%%%%%%%
\section{Conclusion} \label{sec:conclusion}
%%%%%%%%%%%%%%%%%%%%%%%%%%%%%%%%%%%%%%%%%%%%%%%%

In this paper, we proposed a new clustered federated learning method that assigns hospitals to coalitions allowing hospitals in the same coalition to collaboratively train a common model. We assumed that the local data at each hospital may be subject to local distribution shifts and may also be statistically different across hospitals. Our proposed clustered federated learning method designs coalitions that are robust to distribution shifts in the local data and learns local models that are unbiased in the presence of statistically heterogeneous hospitals. We evaluated our method on synthetic and real healthcare data and showed that it outperforms models that are trained solely on local data or federated models that are not robust to distribution shifts. In future work, we will explore the use of a broader class of models, including multi-layer perceptrons, Neural Networks, etc.

%%%%%%%%%%%%%%%%%%%%%%%%%%%%%%%%%%%%
\appendix
%%%%%%%%%%%%%%%%%%%%%%%%%%%%%%%%%%%%

\begin{proof}[Proof of Lemma \ref{lemma:upper-bounds}]
Let $a_k = \textstyle\sum_{i=1}^N a_{i,k}$, and let $\mathcal{P}_i = B_{\epsilon_i}(\hat{\P}_i)$. Applying Jensen's inequality, along with the monotonicity of the supremum and the additive property of expectation yields $\ell_i^{rob}(\btheta, \pi) =$
\begin{align*}
\sup_{\Q_i \in \mathcal{P}_i} \E_{(x,y) \sim \Q_i}\left[ \mathcal{L}\left(\textstyle\sum_{j=1}^N \frac{a_{j,k}}{a_k}\theta_{j}^\top x, y \right)\right] \leq \\
    \sup_{\Q_i \in \mathcal{P}_i} \E_{(x,y) \sim \Q_i}\left[ \textstyle\sum_{j=1}^N \frac{a_{j,k}}{a_k} \mathcal{L} \left( \theta_j^{\top} x, y \right)\right]  = \\
    \sup_{\Q_i \in \mathcal{P}_i}  \textstyle\sum_{j=1}^N \frac{a_{j,k}}{a_k} \E_{(x,y) \sim \Q_i}\left[\mathcal{L} \left( \theta_j^{\top} x, y \right)\right] = \\
       \textstyle\sum_{j=1}^N \frac{a_{j,k}}{a_k} \sup_{\Q_i \in \mathcal{P}_i} \E_{(x,y) \sim \Q_i}\left[\mathcal{L} \left( \theta_j^{\top} x, y \right)\right].
\end{align*}
\end{proof}

\begin{proof}[Proof of Proposition 1]
From \eqref{eq:proposition-epsilon} and $\alpha = \sqrt{||\theta_i||_2^2 + 1}$ from Lemma \ref{lem:linear}, if $\epsilon_i = \epsilon$ $\forall i \in \{1,\dots,N\}$, then the objective becomes $\sum_{i=1}^N \epsilon \sqrt{||\theta_i||_2^2 + 1} + $
    \begin{align*}
         &\textstyle\sum_{k=1}^K \textstyle\sum_{i=1}^N \textstyle\sum_{j=1}^N \frac{a_{i,j, k}}{\textstyle\sum_{j = 1}^N a_{j,k}} \bigl( \E_{(x,y)\sim \hat{\P}_i}\bigl[ |\theta_{j}^{\top} x - y | \bigr]\bigr).
    \end{align*}
In this case, the coalition structure returned by this method is independent of the value of the radius of the ambiguity set $\epsilon$, since all the decision variables $a_{i,j,k}, a_{j,k}$ are independent of $\epsilon$.
\end{proof}
%%%%%%%%%%%%%%%%%%%%%%%%%%%%%%%%%%%%%%%%%%%%%%%
\addtolength{\textheight}{-11cm}
\bibliographystyle{ieeetr}
\bibliography{IEEEabrv,bibliography}
%%%%%%%%%%%%%%%%%%%%%%%%%%%%%%%%%%%%
\end{document}